\def\eqref#1{equation~\ref{#1}}
\def\1{\bm{1}}
\def\mD{{\bm{D}}}
\def\mG{{\bm{G}}}
\def\mH{{\bm{H}}}
\def\mI{{\bm{I}}}
\def\mL{{\bm{L}}}
\def\mN{{\bm{N}}}
\def\mP{{\bm{P}}}
\def\mY{{\bm{Y}}}
\DeclareMathAlphabet{\mathsfit}{\encodingdefault}{\sfdefault}{m}{sl}
\SetMathAlphabet{\mathsfit}{bold}{\encodingdefault}{\sfdefault}{bx}{n}
\def\varcls{{\omega}}
\def\varreg{{\theta}}
\newcommand*{\eg}{e.g.\@\xspace}
\newcommand*{\ie}{i.e.\@\xspace}
\newcommand*{\etc}{%
    \@ifnextchar{.}%
        {etc}%
        {etc.\@\xspace}%
}
\def\ie{\mbox{\textit{i.e.}, }}
\def\eg{\mbox{\textit{e.g.}, }}
\def\mD{{\mathcal D}}
\def\mG{{\mathcal G}}
\def\mH{{\mathcal H}}
\def\mI{{\mathcal I}}
\def\mL{{\mathcal L}}
\def\mN{{\mathcal N}}
\def\mP{{\mathcal P}}
\def\mY{{\mathcal Y}}
\DeclareMathAlphabet\mathbfcal{OMS}{cmsy}{b}{n}
\def\0{{\bf 0}}
\def\1{{\bf 1}}
\def\bc{{\bf c}}
\def\bx{{\bf x}}
\def\by{{\bf y}}
\def\bz{{\bf z}}
\def\mmE{{\mathbb E}}
\def\mmR{{\mathbb R}}
\def\trsp{{\sf T}}
\def\st{{\mathrm{s.t.}}}
\def\bx{{\bf x}}
\def\by{{\bf y}}
\def\bc{{\bf c}}
\def\bz{{\bf z}}
\def\st{{\mathrm{s.t.}}}
\newtheorem{thm}{Theorem}
\newtheorem*{*thm}{Theorem}
\newtheorem{lemma}{Lemma}
\newtheorem*{*lemma}{Lemma}
\newenvironment*{proof}{\textbf{Proof}\quad}{\hfill $\square$\par\vspace{2pt}}
\def\mm1{{\mathds{1}}}
\def\shihao{\textcolor{black}}
\title{Improving Deep Regression with Ordinal Entropy}
\author{Shihao Zhang$^1$, Linlin Yang$^1$, Michael Bi Mi$^2$, Xiaoxu Zheng$^2$, Angela Yao$^1$ \\
$^1$National University of Singapore, Singapore, \\ 
$^2$Huawei International Pte Ltd, Singapore
}
\begin{document}

\maketitle

\begin{abstract}
In computer vision, it is often observed that formulating regression problems as a classification task yields better performance.  
We investigate this curious phenomenon and provide a derivation to show that classification, with the cross-entropy loss, outperforms 
regression with a mean squared error loss in its ability to learn high-entropy feature representations.
Based on the analysis, we propose an ordinal entropy \shihao{regularizer} to encourage higher-entropy feature spaces while maintaining ordinal relationships to improve the performance of regression tasks.
Experiments on synthetic and real-world regression tasks demonstrate the 
importance and benefits of increasing entropy for regression. Code can be found here: \url{https://github.com/needylove/OrdinalEntropy}
\end{abstract}

\section{Introduction}
\label{sec:introduction}

Classification and regression are two fundamental tasks of machine learning. The choice between the two usually depends on the categorical or continuous nature of the target output. Curiously, in computer vision, specifically with deep learning, it is often preferable to solve regression-type problems as classification tasks. A simple and common way is to discretize the continuous labels; each bin is then treated as a class. After converting regression into classification, the ordinal information of the target space is lost. Discretization errors are also introduced to the targets. Yet for a diverse set of regression problems, including depth estimation~\citep{cao2017estimating}, age estimation~\citep{rothe2015dex}, crowd-counting~\citep{liu2019counting} and keypoint detection~\citep{li20212d}, classification yields better performance.

The phenomenon of classification outperforming regression on inherently continuous estimation tasks naturally begs the question of why. Previous works have not investigated the cause, although they hint at task-specific reasons. 
For depth estimation, both~\cite{cao2017estimating} and \cite{fu2018deep} postulate that it is easier to estimate a quantized range of depth values rather than one precise depth value. 
For crowd counting, regression suffers from inaccurately generated target values~\citep{xiong2022discrete}. Discretization helps alleviate some of the imprecision. For pose estimation, classification allows for the denser and more effective heatmap-based supervision~\citep{zhang2020distribution,gu2021removing,gu2021dive}.

Could the performance advantages of classification 
run deeper than task-specific nuances? In this work, we posit that regression lags in its ability to learn high-entropy feature representations. We arrive at this conclusion by analyzing the differences between classification and regression from a mutual information perspective.   
According to~\citet{shwartz2017opening}, deep neural networks during learning aim to maximize the mutual information between the learned representation $Z$ and the target $Y$. The mutual information between the two can be defined as  $\mI(Z; Y) = \mH(Z) - \mH(Z|Y)$. 
$\mI(Z; Y)$ is large when the marginal entropy $\mH(Z)$ is high, \ie features $Z$ are as spread as possible, and the conditional entropy $\mH(Z|Y)$ is low, \ie features of common targets are as close as possible. 
Classification accomplishes both objectives~\citep{boudiaf2020unifying}.  This work, as a key contribution, shows through derivation that regression minimizes 
$\mH(Z|Y)$ but ignores 
$\mH(Z)$.  Accordingly, the learned representations $Z$ from regression have a 
lower marginal entropy (see Fig.~\ref{fig_train}).  A t-SNE visualization of the features (see Fig.~\ref{fig_regression} and\ref{fig_classification}) confirms that features learned by classification have more spread than features learned by regression. More visualizations are shown in Appendix \ref{appendix_C}.

\begin{figure}[!t]
\centering

\subfigure[Entropy of feature space] {
 \label{fig_train}
\includegraphics[width=0.33\columnwidth]{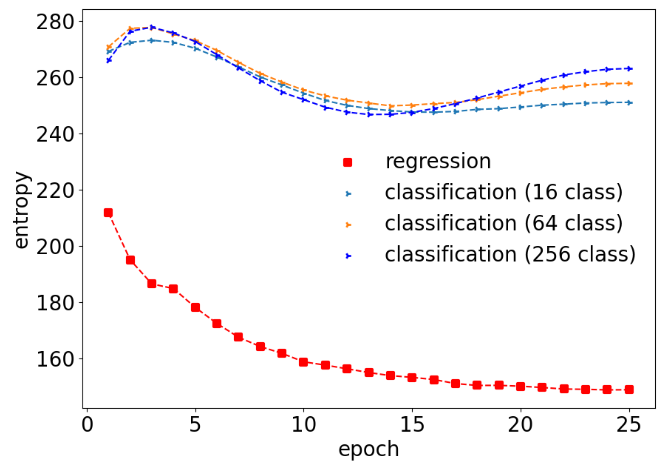}
}
\subfigure[Regression] {
 \label{fig_regression}
\includegraphics[width=0.25\columnwidth]{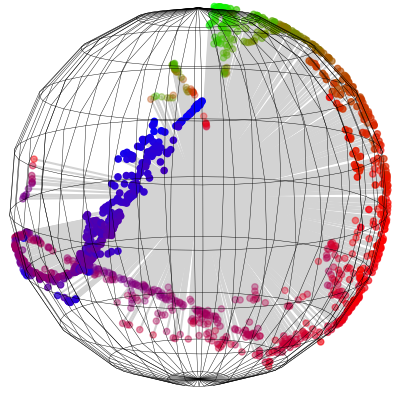}
}
\subfigure[Classification] {
\label{fig_classification}
\includegraphics[width=0.28\columnwidth]{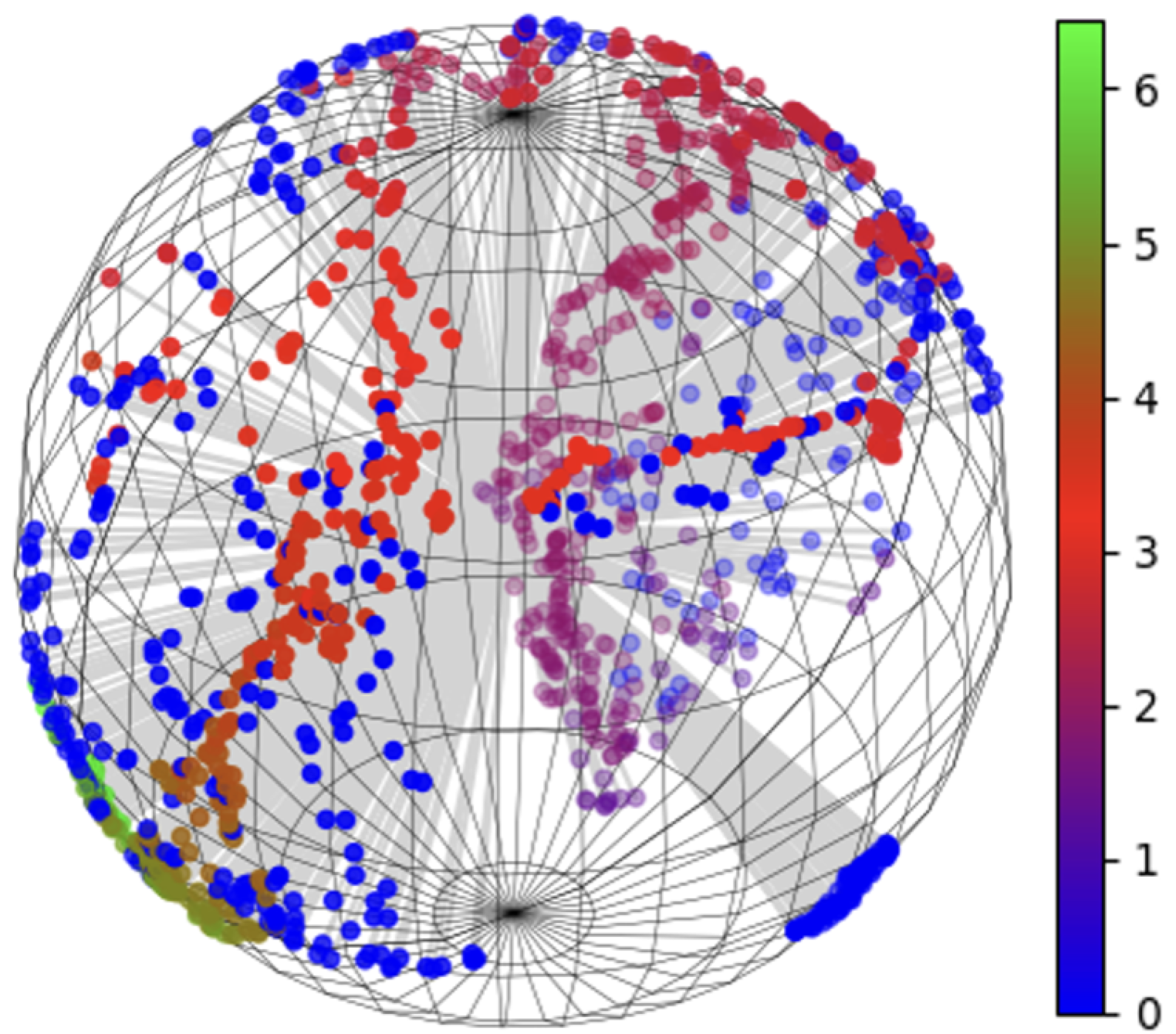}
}
\caption{Feature learning of regression versus classification for depth estimation.
Regression keeps features close together and forms an ordinal relationship, while classification spreads the features (compare (b) vs. (c) ), leading to a higher entropy feature space. Features are colored based on their predicted depth. Detailed experimental settings are given in Appendix \ref{appendix_C}.
}
\label{fig_motivation1}
\end{figure}

The difference in entropy between classification and \shihao{regression} stems from the different losses. We postulate that the lower entropy features learned by 
$L_2$ losses in regression explain the performance gap compared to classification.  Despite its overall performance advantages, classification lags in the ability to capture ordinal relationships.  As such, simply spreading the features for regression to emulate classification will break the inherent ordinality of the regression target output.  

To retain the benefits of both high entropy and ordinality for feature learning, we propose, as a second contribution, an 
ordinal entropy \shihao{regularizer}
for regression.  
Specifically, we capture ordinal relationships as a weighting based on the distances between samples in both the representation and target space. 
Our ordinal entropy \shihao{regularizer} increases the distances between representations, while weighting the distances to preserve the ordinal relationship.

The experiments on various regression tasks demonstrate the effectiveness of our proposed method.
Our main contributions are three-fold: 
\begin{itemize}
    \item To our best knowledge, we are the first to analyze regression's reformulation as a classification problem, especially in the view of representation learning. {We find that regression lags in its ability to learn high-entropy features, which in turn leads to the lower mutual information between the learned representation and the target output.}
    \item Based on our theoretical analysis, we design an ordinal entropy \shihao{regularizer} to 
    learn high-entropy feature representations that preserve ordinality.
    \item Benefiting from our ordinal entropy loss, our methods achieve significant improvement on synthetic datasets for solving ODEs and stochastic PDEs as well as real-world regression tasks including depth estimation, crowd outing and age estimation. 
\end{itemize}

\section{Related work}\label{sec:relatedwork}

\textbf{Classification for Continuous Targets.} 
Several works formulate regression problems as classification tasks to improve performance.  They focus on different design aspects such as label discretization and uncertainty modeling. 
To discretize the labels, \citet{cao2017estimating,fu2018deep} and \citet{liu2019counting} convert the continuous values into discrete intervals with a pre-defined interval width. 
To improve class flexibility,~\citet{bhat2021adabins} followed up with an adaptive bin-width estimator. 
Due to inaccurate or imprecise regression targets, several works have explored modeling the uncertainty of labels with classification. \citet{liu2019counting} proposed estimating targets that fall within a certain interval with high confidence.~\citet{tompson2014joint} and~\cite{newell2016stacked} propose modeling the uncertainty by using a heatmap target in which each pixel represents the probability of that pixel being the target class.  This work, instead of focusing on task-specific designs, explores the difference between classification and regression from a learning representation point of view.  By analyzing mutual information, we reveal a previously underestimated impact of high-entropy feature spaces.

\shihao{\textbf{Ordinal Classification.} 
Ordinal classification aims to predict ordinal target outputs.
Many works exploit the distances between labels~\citep{castagnos-etal-2022-simple,10.1007/978-3-031-12053-4_12, gong2022ranksim} to preserve ordinality. Our ordinal entropy regularizer also preserves the ordinality by exploiting the label distances, while it mainly aims at encouraging a higher-entropy feature space.
}

\textbf{Entropy.}
The entropy of a random variable reflects its uncertainty and can be used to analyze and regularize a feature space. 
With entropy analysis, \cite{boudiaf2020unifying} has shown the benefits of the cross-entropy loss, \ie encouraging features to be dispersed while keeping intra-class features compact. Moreover, existing works~\citep{pereyra2017regularizing,dubey2017regularizing} have shown that 
many regularization terms, like confidence penalization~\citep{pereyra2017regularizing} and label smoothing~\citep{muller2019does}, are actually regularizing the entropy of the output distribution. Inspired by these works, we explore the difference in entropy between classification and regression. Based on our entropy analysis, we design an entropy term (\ie ordinal entropy) for regression and bypass explicit classification reformulation with label discretization.

\begin{figure}[!t]
\centering
\subfigure[{Illustration of frameworks}] {
\label{fig:framework}
\includegraphics[width=0.32\columnwidth]{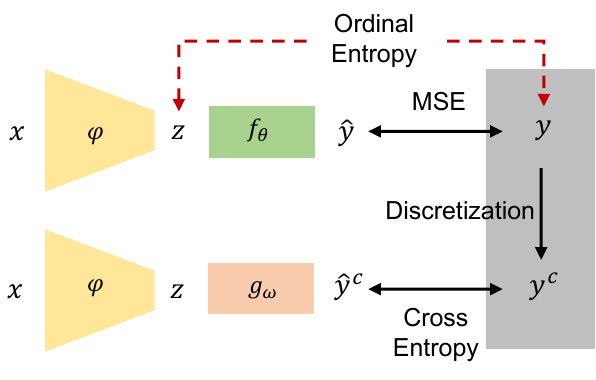}
}
\subfigure[{Tightness and diversity of ordinal entropy}] {
\label{fig_ordinalEntropy}
\includegraphics[width=0.64\columnwidth]{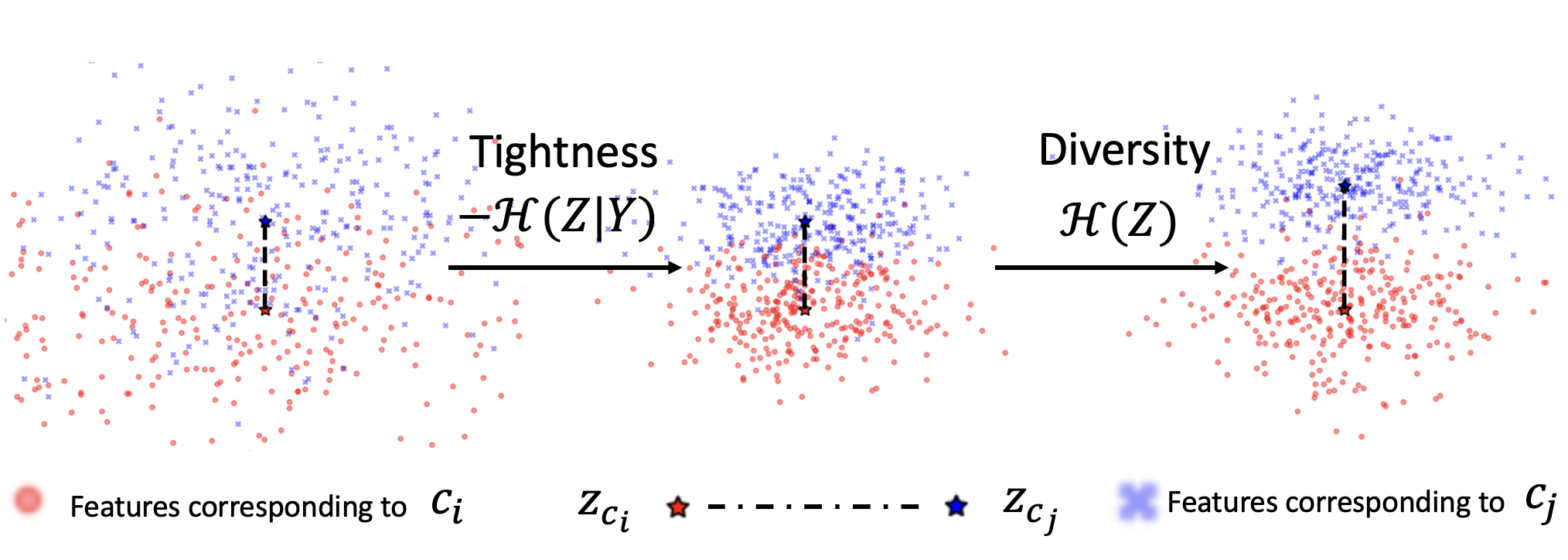}
}
\caption{Illustration of (a) regression and classification for continuous targets, and the use of our ordinal entropy for regression, (b) the pull and push objective of tightness and diversity on the feature space. The tightness part encourages features to be close to their feature centers while the diversity part encourages feature centers
to be far away from each other.}
\label{fig_weightentropy}
\end{figure}

\section{A Mutual-Information Based Comparison 
on Feature Learning}
\label{sec:theory}
\subsection{Preliminaries}
Suppose we have a dataset $\{X, Y\}$ with $N$ input data $X=\{\bx_i\}_{i=1}^N$ and their corresponding labels $Y=\{y_i\}_{i=1}^N$. In a typical regression problem for computer vision, $\bx_i$ 
is an image or video,  
while $y_i \in \mathcal{R}^{\mY}$ takes a continuous value in the label space $\mY$.
The target of regression is to recover $y_i$ by encoding the image to feature $\bz_i=\varphi(\bx_i)$ with encoder $\varphi$ and then mapping $\bz_i$ to a predicted target $\hat{y}_i = f_{\varreg}(\bz_i)$ with a regression function $f(\cdot)$ parameterized by $\varreg$. 
The encoder $\varphi$ and $\varreg$ are learned by minimizing a regression loss function such as the mean-squared error $\mL_{\text{mse}} = \frac{1}{N}\sum_{i=1}^N(y_i-\hat{y}_i)^2$. 

To formulate regression as a classification task with $K$ classes, the continuous target $Y$ can be converted to classes $Y^C=\{\by^c_i\}_{i=1}^N$ with some discretizing mapping function, where $\by^c_i \in [0, K-1]$ is a categorical target. The feature $\bz_i$ is then mapped to the categorical target $\by^c_i = g_{\varcls}(\bz_i)$ with classifier $g_{\varcls}(\cdot)$ parameterized by $\varcls$.  The encoder $\phi$ and $\varcls$ are learned by minimizing the cross-entropy loss $\mL_{CE} = -\frac{1}{N}\sum_{i=1}^{N}\log g_{\varcls}(\bz_i)$, where  $(g_{\varcls}(\bz_i))_k=\frac{exp \varcls_k^{\trsp}\bz_i}{\sum_j exp \varcls_k^{\trsp}\bz_j}$ .
Fig.~\ref{fig:framework} visualizes the symmetry of the two formulations. 

The entropy of a random variable can be loosely defined as the amount of ``information'' associated with that random variable. One approach for estimating entropy $\mH(Z)$ for a random variable $Z$ is the meanNN entropy estimator~\citep{faivishevsky2008ica}.  
It can accommodate higher-dimensional $Z$s, and is commonly used in high-dimensional space~\citep{faivishevsky2010nonparametric}. 
The meanNN estimator relies on the distance between samples to approximate $p(Z)$. For a $D$-dimensional $Z$, it is defined as

\begin{equation}
    \hat{\mH}(Z) = \frac{D}{N(N-1)}\sum_{i \neq j} \log ||\bz_i - \bz_j||^2 + \text{const}.
    \label{equ_entropyestimation}
\end{equation}

\subsection{Feature Learning with a Cross-Entropy Loss (Classification)} 

Given that mutual information between target $Y^C$ and feature $Z$ is defined as $\mI(Z; Y^C)=\mH(Z)-\mH(Z|Y^C)$, it follows that $\mI(Z; Y^C)$ can be maximized by minimizing the second term $\mH(Z|Y^C)$ and maximizing the first term $\mH(Z)$. \citet{boudiaf2020unifying} showed that minimizing the cross-entropy loss accomplishes both 
by approximating the standard cross-entropy loss $\mL_{CE}$ with a pairwise cross-entropy loss $\mL_{PCE}$.  $\mL_{PCE}$ serves as a lower bound for $\mL_{CE}$, and can be defined as 

\begin{equation}
    \!\!\!\!\mL_{PCE} = \underbrace{- \frac {1}{2\lambda N^{2}}  \sum^N_{i=1} \sum_{j\in [\by^c_j=\by^c_i]}\bz_{i}^{\intercal}\bz_{j}}_{\textit{Tightness}~\propto~\mH(Z|Y^C)}  +  \underbrace{\frac{1}{N}\sum^N_{i=1}\log\sum^K_{k=1}\exp(\frac{1}{\lambda N}\sum_{j=1}^Np_{jk}\bz_{i}^{\intercal}\bz_{j})-\frac{1}{2\lambda}\sum_{k=1}^K||\bc_k||}_{\textit{Diversity}~\propto~\mH(Z)},
    \label{eq:pce_reformulated}
\end{equation}

where 
$\lambda \in \mmR$ is to make sure  $\mL_{CE}$ is a convex function with respect to $\varcls$.
Intuitively, $\mL_{PCE}$ can be understood as being composed of both a pull and a push objective more familiar in contrastive learning. We interpret the pulling force as a tightness term.  It encourages higher values for $\bz_i^{\intercal}\bz_j$ and closely aligns the feature vectors within a given class. This results in features clustered according to their class, \ie lower conditional entropy $\mH(Z|Y^C)$.  The pushing force from the second term encourages lower $\bz_i^{\intercal}\bz_j$ while forcing classes' centers $\bc_k^s$ to be far from the origin.  This results in diverse features that are spread apart, or higher marginal entropy $\mH(Z)$. Note that the tightness term corresponds to the numerator of the Softmax function in $\mL_{CE}$, while the diversity term corresponds to the denominator. 

\subsection{Feature Learning with an $\mL_{\text{mse}}$ Loss (Regression)} 

In this work, we find that minimizing $\mL_{\text{mse}}$, as done in regression, is a proxy for minimizing $\mH(Z|Y)$, without increasing $\mH(Z)$.  Minimizing $\mL_{\text{mse}}$ does not increase the marginal entropy $\mH(Z)$ and therefore limits feature diversity. The link between classification and regression is first established below in Lemma~\ref{Lemma_discretize}.  We assume that we are dealing with a linear regressor, as is commonly used in deep neural networks.

\begin{lemma}
\label{Lemma_discretize}
	We are given dataset $\{\bx_i, y_i\}^N_{i=1}$, where $\bx_i$ is the input and $y_i \in \mY$ is the label,
	and linear regressor $f_\varreg(\cdot)$ parameterized by 
	$\varreg$.  
	Let $\bz_i$ denote the corresponding feature. 
	Assume that the label space $\mY$ is discretized into bins with maximum width $\eta$, and $c_i$ is the center of the bin to which $y_i$ belongs. Then for any $\epsilon >0$, there exists $\eta>0$ such that:
\begin{equation}\label{eq:lemma1}
	 |\mL_{{\text{mse}}} - \frac{1}{N} \sum_{1}^{N}(\varreg^{\trsp}\bz_i -c_i)^2| \shihao{\leq  
    \frac{\eta}{2n} \sum_{1}^{n}|2\varreg^{\trsp}\bz_i -c_i -y_i|} < \epsilon.
    \end{equation}
\end{lemma}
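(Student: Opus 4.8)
The plan is to reduce the entire statement to one elementary algebraic identity followed by a short limiting argument, exploiting that the features $\bz_i$, labels $y_i$, and regressor $\varreg$ are all held fixed while only the bin width $\eta$ is allowed to shrink. Writing $\hat{y}_i = f_\varreg(\bz_i) = \varreg^{\trsp}\bz_i$ for the linear regressor, the quantity inside the first absolute value is a difference of two averaged squared errors, $\frac{1}{N}\sum_i(y_i-\hat{y}_i)^2 - \frac{1}{N}\sum_i(\hat{y}_i-c_i)^2$. My first step is to expand the $i$-th summand as a difference of squares.

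Setting $a_i = y_i - \hat{y}_i$ and $b_i = \hat{y}_i - c_i$, the factorization $a_i^2 - b_i^2 = (a_i-b_i)(a_i+b_i)$ gives $a_i - b_i = y_i - 2\hat{y}_i + c_i$ and $a_i + b_i = y_i - c_i$, so the difference equals $\frac{1}{N}\sum_i (y_i - 2\hat{y}_i + c_i)(y_i - c_i)$. Applying the triangle inequality together with the defining property of the bin center — namely $|y_i - c_i| \leq \eta/2$, since $c_i$ is the midpoint of a bin of width at most $\eta$ containing $y_i$ — immediately yields the first inequality of \eqref{eq:lemma1}, because $|y_i - 2\hat{y}_i + c_i| = |2\varreg^{\trsp}\bz_i - c_i - y_i|$.

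For the second inequality (the $<\epsilon$ clause), the key observation is that $\sum_i |2\hat{y}_i - c_i - y_i|$ stays bounded as $\eta \to 0$. I would bound it by $\sum_i\big(2|\hat{y}_i - y_i| + |y_i - c_i|\big) \leq 2\sum_i|\hat{y}_i - y_i| + N\eta/2$, where $M := 2\sum_i|\hat{y}_i - y_i|$ is a fixed constant determined entirely by the given data and $\varreg$. Substituting into the middle expression produces an upper bound of the form $\frac{\eta M}{2N} + \frac{\eta^2}{4}$, which tends to $0$ as $\eta \to 0$; choosing $\eta$ small enough that this quantity is below $\epsilon$ finishes the proof.

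I do not anticipate a genuine obstacle, as the statement is an elementary perturbation bound. The only point demanding care is this last clause, where one must verify that the coefficient multiplying $\eta$ does not blow up; this is guaranteed precisely because $\bz_i$, $y_i$, and $\varreg$ are frozen while $\eta$ shrinks, so the sole $\eta$-dependence of $\sum_i|2\hat{y}_i - c_i - y_i|$ enters through $c_i$, which remains within $\eta/2$ of $y_i$. (I also read the $n$ in the displayed bound as $N$, matching the definition of $\mL_{\text{mse}}$.)
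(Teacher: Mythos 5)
Your proof is correct, and its core coincides with the paper's: both reduce the claim to the identity $(y_i-\hat y_i)^2-(\varreg^{\trsp}\bz_i-c_i)^2=(y_i-c_i)\,(y_i+c_i-2\varreg^{\trsp}\bz_i)$ (the paper reaches it by expanding $((\varreg^{\trsp}\bz_i-c_i)+(c_i-y_i))^2$, you by factoring a difference of squares --- the same identity), and both then invoke $|y_i-c_i|\le\eta/2$ to obtain the first inequality in the statement. Where you genuinely differ is the final ``$<\epsilon$'' clause, and your version is the more rigorous one. The paper concludes by choosing $\eta<\frac{2n\epsilon}{\sum_i|2\varreg^{\trsp}\bz_i-c_i-y_i|}$, which as written is circular: the $c_i$ appearing in that threshold are themselves determined by the discretization of width $\eta$, so the admissible range for $\eta$ depends on $\eta$. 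You remove this circularity by bounding $\sum_i|2\varreg^{\trsp}\bz_i-c_i-y_i|\le M+N\eta/2$ with $M:=2\sum_i|\varreg^{\trsp}\bz_i-y_i|$ a constant independent of $\eta$, which turns the middle expression into the explicit bound $\frac{\eta M}{2N}+\frac{\eta^2}{4}\to 0$ as $\eta\to 0$, from which a legitimate, non-circular choice of $\eta$ follows. So your argument not only matches the paper's but quietly repairs a small defect in its last step; your decision to read the paper's $n$ as $N$ is also the intended one, since the statement and the appendix use the two symbols interchangeably for the same sample count.
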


The detailed proof of Lemma~\ref{Lemma_discretize} is provided in Appendix~\ref{appendix_A}. 

The result of Lemma~\ref{Lemma_discretize} says that the discretization error from replacing a regression target $y_i$ with $c_i$ can be made arbitrarily small if the bin width $\eta$ is sufficiently fine. As such, the $\mL_{\text{mse}}$ can be directly approximated by the second term of Eq.~\ref{eq:lemma1} \ie  $\mL_{\text{mse}} \approx \frac{1}{N} \sum_{1}^{N}(\varreg^{\trsp}\bz_i -c_i)^2$. With this result, it can be proven that minimizing $\mL_{\text{mse}}$ is a proxy for \shihao{minimizing} $\mH(Z|Y)$.

\begin{thm}
Let $\bz_{c_i}$ denote the \shihao{center of the features} corresponding to bin center $c_i$, and $\phi_i$ be the angle between $\varreg$ and $\bz_i -\bz_{c_i}$. Assume that $\varreg$ is normalized, $(Z^c | Y) \sim \mN(\bz_{c_i}, I)$, where $Z^c$ is the distribution of $\bz_{c_i}$ and that $\cos \phi_i$ is fixed. 
Then, minimizing $\mL_{\text{mse}}$ can be seen as a proxy for \shihao{minimizing} $\mH(Z|Y)$ \shihao{ without increasing $\mH(Z)$}.
\label{thm:1}
\end{thm}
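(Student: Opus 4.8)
The plan is to establish Theorem~\ref{thm:1} in two parts: first, that $\mL_{\text{mse}}$ is, up to the discretization error controlled by Lemma~\ref{Lemma_discretize}, a monotone proxy for the within-class feature scatter, which under the Gaussian assumption equals $\mH(Z|Y)$ up to a constant and a monotone transform; and second, that $\mL_{\text{mse}}$ contains no term capable of driving $\mH(Z)$ upward. I would begin from the conclusion of Lemma~\ref{Lemma_discretize}, namely $\mL_{\text{mse}} \approx \frac{1}{N}\sum_{i=1}^N(\varreg^{\trsp}\bz_i - c_i)^2$, so that it suffices to analyze the surrogate $\frac{1}{N}\sum_i(\varreg^{\trsp}\bz_i - c_i)^2$.

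\textbf{First part (proxy for $\mH(Z|Y)$).} Because $f_\varreg$ is linear and $\bz_{c_i}$ is the mean feature of the bin with center $c_i$, the regressor sends the class center to the bin center, i.e. $\varreg^{\trsp}\bz_{c_i}=c_i$; hence $\varreg^{\trsp}\bz_i - c_i = \varreg^{\trsp}(\bz_i-\bz_{c_i})$. Using that $\varreg$ is normalized and that $\phi_i$ is the angle between $\varreg$ and $\bz_i-\bz_{c_i}$, this equals $\|\bz_i-\bz_{c_i}\|\cos\phi_i$, so squaring gives $(\varreg^{\trsp}\bz_i-c_i)^2 = \|\bz_i-\bz_{c_i}\|^2\cos^2\phi_i$. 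Summing and using that $\cos\phi_i$ is fixed, $\mL_{\text{mse}}$ becomes proportional to the within-class scatter $\frac1N\sum_i\|\bz_i-\bz_{c_i}\|^2$. I would then invoke $(Z\mid Y)\sim\mN(\bz_{c_i},I)$: the differential entropy of an isotropic Gaussian is $\frac{D}{2}\log(2\pi e\,\sigma^2)+\text{const}$, with $\sigma^2$ the per-coordinate within-class variance, which is exactly the quantity the scatter estimates. Since $\log$ is increasing, driving the scatter down drives $\mH(Z|Y)$ down, so minimizing $\mL_{\text{mse}}$ is a proxy for minimizing $\mH(Z|Y)$.

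\textbf{Second part (no increase of $\mH(Z)$).} Here I would argue that the same surrogate can only compress the feature cloud. The gradient of $\frac1N\sum_i(\varreg^{\trsp}\bz_i-c_i)^2$ with respect to $\bz_i$ equals $\frac2N(\varreg^{\trsp}\bz_i-c_i)\,\varreg$, which is parallel to $\varreg$ for every sample; thus the only feature motion induced by $\mL_{\text{mse}}$ lies along the single direction $\varreg$ and pulls each $\varreg^{\trsp}\bz_i$ toward its fixed target $c_i$. Comparing this against the meanNN estimator $\hat{\mH}(Z)=\frac{D}{N(N-1)}\sum_{i\neq j}\log\|\bz_i-\bz_j\|^2+\text{const}$, whose gradient carries a repulsive $\sum_{j\neq i}(\bz_i-\bz_j)/\|\bz_i-\bz_j\|^2$ term, one sees that $\mL_{\text{mse}}$ has no analogous diversity/spreading force: within-class pairs are drawn together while the between-class separation stays fixed by the targets $c_i$, so $\hat{\mH}(Z)$ cannot increase.

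\textbf{Main obstacle.} The delicate point is making the second claim rigorous rather than merely qualitative. Because the between-class pairwise distances themselves shift as the within-class scatter shrinks, the cleanest route is to split the sum in $\hat{\mH}(Z)$ into within-class and between-class contributions, note that the within-class contribution strictly decreases under the scatter reduction, and argue that the class centers $\bz_{c_i}$ are pinned along $\varreg$ by the regression targets so that no component of the $\mL_{\text{mse}}$ gradient can enlarge the between-class term. This isolates the absence of a spreading force and yields the stated conclusion that minimizing $\mL_{\text{mse}}$ lowers $\mH(Z|Y)$ without increasing $\mH(Z)$.
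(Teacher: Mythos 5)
Your first part reaches the paper's reduction by the same skeleton --- Lemma~\ref{Lemma_discretize}, the implicit identification $\varreg^{\trsp}\bz_{c_i}=c_i$ (which you at least try to justify; the paper makes the same leap silently), and the angle decomposition giving $\mL_{\text{mse}} \propto \frac{1}{N}\sum_i ||\bz_i-\bz_{c_i}||^2$ --- but then finishes by a genuinely different route: you use monotonicity of the Gaussian differential entropy $\frac{D}{2}\log(2\pi e\,\sigma^2)$ in a within-class variance $\sigma^2$ estimated by the scatter, whereas the paper interprets the scatter as a conditional \emph{cross-entropy} against the fixed unit Gaussian $\mN(\bz_{c_i}, I)$ and invokes the decomposition $\mH(Z;Z^c|Y)=\mH(Z|Y)+\mD_{KL}(Z||Z^c|Y)$, so that the scatter upper-bounds $\mH(Z|Y)$, with tightness when $(Z|Y)$ is itself unit Gaussian. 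Note a small internal tension in your version: you state the assumption as $(Z|Y)\sim\mN(\bz_{c_i},I)$, but with covariance literally fixed to $I$ the conditional entropy is a constant and there is nothing to minimize; your argument really needs the variance to be a free parameter tracked by the scatter. The paper's bound-based route sidesteps this by keeping the reference Gaussian fixed. Both are at a comparable (heuristic) level of rigor, and for this half your approach is an acceptable, arguably more elementary, alternative.

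The second part is where there is a genuine gap. Your argument hinges on the claims that "between-class separation stays fixed by the targets $c_i$" and that "no component of the $\mL_{\text{mse}}$ gradient can enlarge the between-class term." Both are false: the gradient $\frac{2}{N}(\varreg^{\trsp}\bz_i - c_i)\,\varreg$ drives each projection toward its \emph{own} target, so two features from different bins whose projections lie closer together than $|c_i - c_j|$ (in the extreme, $\bz_i=\bz_j$ at initialization) are pushed \emph{apart}, the between-class pairwise distances grow toward $|c_i-c_j|$, and the between-class contribution to $\hat{\mH}(Z)$ strictly increases --- so the splitting strategy in your "main obstacle" paragraph cannot close the argument, since its key premise is exactly this false pinning claim. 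What your gradient analysis does establish is weaker: the loss exerts no force orthogonal to $\varreg$ and the spread it induces along $\varreg$ is bounded by the spread of the targets, i.e.\ there is no term that rewards increasing $\mH(Z)$ without limit. The paper's proof makes precisely this weaker style of claim, but at the level of the objective rather than its gradients: from $\mL_{\text{mse}} \overset{c}{=} \mH(Z|Y) + \mD_{KL}(Z||Z^c|Y)$ it argues that the KL term only pulls $Z$ toward the centers $Z^c$ (which by construction cannot lie outside the coverage of $Z$), so no summand in the minimized objective acts to raise $\mH(Z)$. To repair your version you would either have to adopt the paper's decomposition for this half, or weaken your stated conclusion from "$\hat{\mH}(Z)$ cannot increase" to "the objective contains no diversity term," which is what the theorem's informal phrase "without increasing $\mH(Z)$" is actually standing in for.
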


\begin{proof}
Based on Lemma~\ref{Lemma_discretize}, we have
\begin{equation}
    \begin{aligned}
    \mL_{\text{mse}} &=\frac{1}{N} \sum_{1}^{N}(\varreg^{\trsp}(\bz_i -\bz_{c_i}))^2 
    = \frac{1}{N} \sum_{1}^{N}(||\varreg|| ||\bz_i -\bz_{c_i}|| \cos{\phi_i})^2 \\
    & = \frac{1}{N} \sum_{1}^{N}||\varreg||^2 ||\bz_i -\bz_{c_i}||^2 \cos^2{\phi_i} 
    \propto \frac{1}{N} \sum_{1}^{N}||\bz_i -\bz_{c_i}||^2.
    \label{e_3}
\end{aligned}
\end{equation}

Note, $\bz_{c_i}$ exist unless $\varreg = 0$ and $c_i \neq 0$.
Since it is assumed that $Z^c | Y \sim \mN(\bz_{c_i}, I)$, the term $\frac{1}{N} \sum_{1}^{N}||\bz_i -\bz_{c_i}||^2$ can be interpreted as a conditional cross entropy between $Z$ and $Z^c$, as it satisfies
\begin{equation}
    \begin{aligned}
    \mH(Z; Z^c |Y) 
    &= -\mmE_{\bz \sim Z |Y} [\log p_{Z^c |Y}(\bz) ] \; \overset{mc}{\approx} \;\frac{-1}{N} \sum_{i=1}^{N} \log (e^{\frac{-1}{2}||\bz_i -\bz_{c_i}||^2}) + \text{const} \\
    &\overset{c}{=} \frac{1}{N} \sum_{i=1}^{N} ||\bz_i -\bz_{c_i}||^2,
\end{aligned}
\end{equation}
where $\overset{c}{=}$ denotes equal to, up to a multiplicative and an additive constant.  The $\overset{mc}{\approx}$ denotes Monte Carlo sampling from the $Z| Y$ distribution, allowing us to replace the expectation by the mean of the samples. Subsequently, we can show that 
\begin{equation}\label{eq:proofresult}
    \mL_{\text{mse}} \propto \frac{1}{N} \sum_{1}^{N}||\bz_i -\bz_{c_i}||^2 \overset{c}{=} \mH(Z; Z^c |Y) = \mH(Z|Y) + \mD_{KL}(Z||Z^c |Y).
\end{equation}
The {result} in Eq.~\ref{eq:proofresult} shows that $\frac{1}{N} \sum_{1}^{N}||\bz_i -\bz_{c_i}||^2$ is an upper bound of the tightness term in mutual information.  
If $(Z | Y) \sim \mN(\bz_{c_i}, I)$, then $\mD_{KL}(Z||Z^c |Y)$ is equal to 0 and the bound is tight \ie $\frac{1}{N} \sum_{1}^{N}||\bz_i -\bz_{c_i}||^2 \geq \mH(Z|Y)$.
Hence, minimizing $\mL_{\text{mse}}$ is a proxy for \shihao{minimizing} $\mH(Z|Y)$.

\shihao{Apart from $\mH(Z|Y)$, the relation in Eq. \ref{eq:proofresult} also contains the KL divergence between the two conditional distributions $P(Z|Y)$ and $P(Z^c|Y)$, where $Z^c_i$ are feature centers of Z.  Minimizing this divergence will either force $Z$ closer to the centers $Z^c$, or move the centers $Z^c$ around. By definition, however, the cluster centers $Z^{c}_i$ cannot expand beyond $Z$’s coverage, so features Z must shrink to minimize the divergence. As such, the entropy $\mH(Z)$ will not be increased by this term.}
\end{proof}

Based on 
Eq. \ref{eq:pce_reformulated}
and Theorem~\ref{thm:1}, we draw the conclusion that regression, with an MSE loss, overlooks the marginal entropy $\mH(Z)$ and results in a less diverse feature space than classification with a cross-entropy loss.


It is worth mentioning that the Gaussian distribution assumption, \ie $Z^c | Y \sim \mN(\bz_{c_i}, I)$, is standard in the literature when analyzing features \citep{yang2021free,salakhutdinov2012one} and entropy \citep{misra2005estimation}, and $\cos \phi_i$ is a constant value at each iteration.

\section{Ordinal Entropy}

Our theoretical analysis in Sec.~\ref{sec:theory} shows that learning with only the MSE loss does not increase the marginal entropy $\mH(Z)$ and results in lower feature diversity.  To remedy this situation, we propose a novel
regularizer to encourage a higher entropy feature space. 

Using the distance-based entropy estimate from Eq.~\ref{equ_entropyestimation}, 
one can then minimize the 
\shihao{the negative distances between feature centers $\bz_{c_i}$ to maximize the entropy of the feature space. $\bz_{c_i}$ are calculated by taking a mean over all the features $\bz$ which project to the same $y_i$.}
Note that as feature spaces are unbounded, the features \shihao{$\bz$} must first be normalized, and below, we assume all the features \shihao{$\bz$} are already normalized with an L2 norm:

\begin{equation}
    \mL'_d = -\frac{1}{M(M-1)}\sum_{i=1}^M\sum_{i \neq j}||\bz_{c_i} - \bz_{c_j}||_2 \propto \shihao{-} H(Z), 
\end{equation}
where $M$ is the number of feature centers \shihao{in a batch of samples or a sampled subset sampled from a batch}. We consider each feature as a feature center when the continuous labels of the dataset are precise enough.

While the \shihao{regularizer} $\mL'_d$ indeed spreads features to a larger extent, it also breaks ordinality in the feature space (see Fig.~\ref{fig_directlyentropy}). As such, we opt to weight the feature norms in $\mL'_d$ with $w_{ij}$, where $w_{ij}$ are the distances in the label space $\mY$:
\begin{equation}
    \mL_d = -\frac{1}{M(M-1)}\sum_{i=1}^M\sum_{i \neq j}w_{ij}||\bz_{c_i} - \bz_{c_j}||_2, \qquad \text{where } w_{ij} = ||y_i - y_j||_2
    \label{eq:ordinal_entropy}
\end{equation}

As shown in Figure \ref{fig_wd}, $\mL_d$ spreads the feature while also preserve preserving ordinality.  Note that $\mL'_d$ is a special case of $\mL_d$ when $w_{ij}$ are all equal.

To further minimize the conditional entropy $\mH(Z|Y)$, we introduce an additional tightness term that directly considers the distance between each feature $\bz_i$ with 
\shihao{its} centers $\bz_{c_i}$ in the feature space:
\begin{equation}
    \mL_t = \frac{1}{N_{b}}\sum_{i=1}^{N_{b}}||\bz_i-\bz_{c_i}||_2,
\end{equation}
\shihao{where $N_{b}$ is the batch size.} Adding this tightness term further encourages features \shihao{close to its centers} (compare Fig.~\ref{fig_wd} with Fig.~\ref{fig_w}).  
Compared with features from standard regression (Fig.~\ref{fig_reg2}), the features in Fig.~\ref{fig_w} are more spread, \ie the lines formed by the features are longer.

We define the ordinal entropy \shihao{regularizer} as $\mL_{oe} = \mL_d + \mL_t$, with a diversity term $\mL_d$ and a tightness term $\mL_t$. $\mL_{oe}$ achieves similar effect as classification in that it spreads $\bz_{c_i}$ while tightening features $z_i$ corresponding to $\bz_{c_i}$. \shihao{Note, if the continuous labels of the dataset are precise enough and each feature is its own center, then our ordinal entropy regularizer will only contain the diversity term, \ie $\mL_{oe} = \mL_d$.}
We show our regression with ordinal entropy (red dotted arrow) in Fig.~\ref{fig:framework}.

The final loss function $\mL_{total}$ is defined as:
\begin{equation}
    \mL_{total} = \mL_m + \lambda_d\mL_d + \lambda_t\mL_t,
\end{equation}
where $\mL_m$ is the task-specific regression loss and $\lambda_d$ and $\lambda_e$ are trade-off parameters.

\begin{figure}[!t]
\centering
\subfigure[Regression] {
 \label{fig_reg2}
\includegraphics[width=0.2\columnwidth]{fig/regression2.png}
}
\subfigure[Regression $+\mL'_d$] {
 \label{fig_directlyentropy}
\includegraphics[width=0.2\columnwidth]{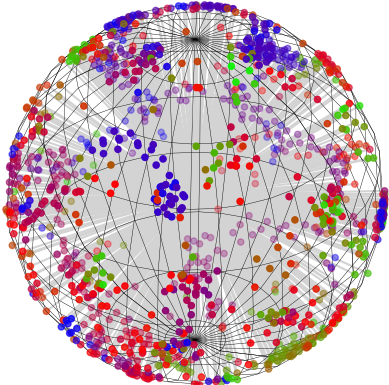}
}
\subfigure[Regression $+\mL_d$] {
 \label{fig_wd}
\includegraphics[width=0.2\columnwidth]{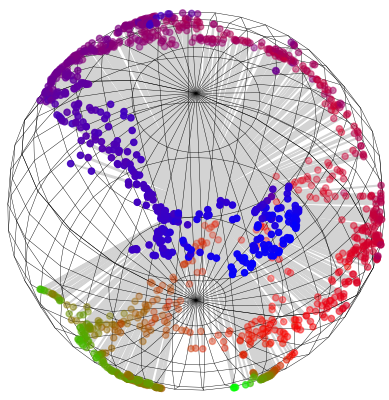}
}
\subfigure[
Regression $+\mL_d + \mL_t$] {
 \label{fig_w}
\includegraphics[width=0.23\columnwidth]{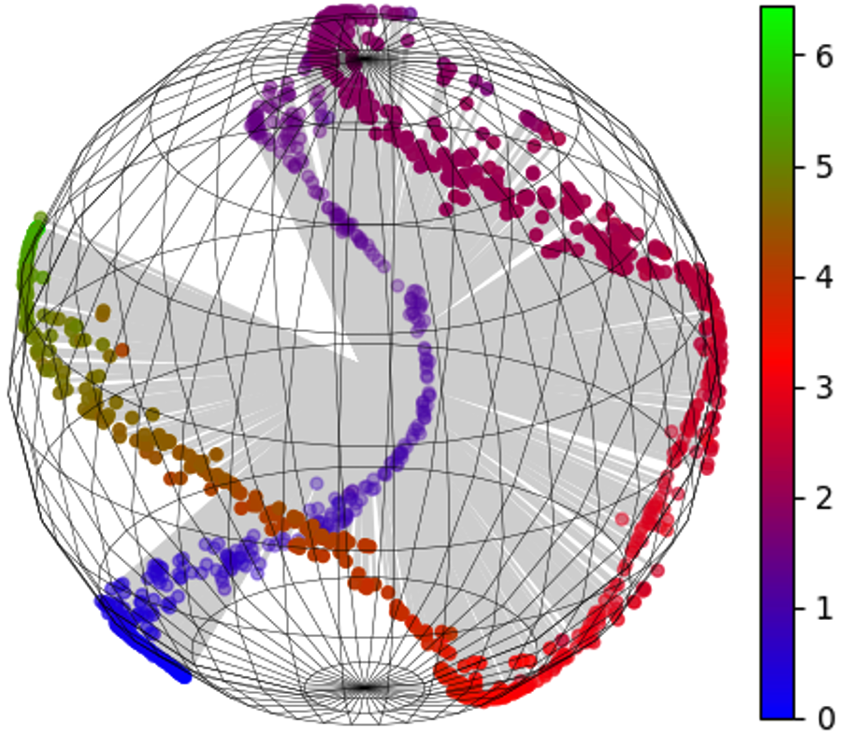}
}
\caption{t-SNE visualization 
of features from the depth estimation task. (b) Simply spreading the features ($\mL'_d$) leads to a higher entropy feature space, while the ordinal relationship is lost. (c) By further exploiting the ordinal relationship in the label space ($\mL_d$), the features are spread and the ordinal relationship is also preserved. (d) Adding the tightness term ($\mL_t$) further encourages features close to its centers.}
\label{fig_vis}
\end{figure}

\section{Experiments}
\label{sec:experiment}

\subsection{Datasets,  Metrics \& Baseline Architectures}
We conduct experiments on four tasks: operator learning, a synthetic dataset and three real-world regression settings of depth estimation, crowd counting, age estimation. 

For \textbf{operator learning}, we follow the task from DeepONet~\citep{lu2021learning} and use a two-layer fully connected neural network with 100 hidden units. See Sec.~\ref{sec:synthetic} for details on data preparation. 

For \textbf{depth estimation}, NYU-Depth-v2 \citep{silberman2012indoor} provides indoor images with the corresponding depth maps at a pixel resolution $640 \times 480$. 
We follow~\citep{lee2019big} and use ResNet50~\citep{he2016deep} as our baseline architecture unless otherwise indicated.  We use the train/test split given used by previous works~\citep{bhat2021adabins,yuan2022new}, 
and evaluate with the standard metrics of threshold accuracy $\delta_1$,  average relative error (REL), root mean squared error (RMS)  and average $\log_{10}$ error.  

For \textbf{crowd counting}, we evaluate on SHA and SHB of the ShanghaiTech crowd counting dataset (SHTech)~\citep{zhang2015cross}. Like previous works, we adopt density maps as labels and evaluate with mean absolute error (MAE) and mean squared error (MSE).  We follow~\cite{li2018csrnet} and use CSRNet with ResNet50 as the \shihao{regression} baseline architecture.

For \textbf{age estimation}, we use AgeDB-DIR~\citep{yang2021delving} and also implement their \shihao{regression} baseline model, which uses ResNet-50 as a backbone. Following~\cite{liu2019large}, we report results on three disjoint subsets (\ie,Many, Med. and Few), and also overall performance (\ie ALL). We evaluate with MAE and geometric mean (GM).

\textbf{Other Implementation Details:} We follow the settings of previous works DeepONet~\citep{lu2021learning} for operator learning, Adabins~\citep{bhat2021adabins} for depth estimation, CSRNet~\citep{li2018csrnet} for crowd counting, and~\cite{yang2021delving} for age estimation.  See Appendix~\ref{appendix_E} for details.

$\lambda_d$ and $\lambda_t$ are set empirically based on the scale of the task loss $\lambda_m$.  We use the trade-off parameters $\lambda_d, \lambda_t$ the same value of $0.001, 1, 10, 1, $ for operator learning, depth estimation, crowd counting and age estimation, respectively.

\subsection{Learning Linear and Nonlinear Operators}\label{sec:synthetic}

We first verify our {method} on the synthetic task of operator learning. In this task, an (unknown) operator maps input functions into output functions and the objective is to regress the output value.  We follow~\citep{lu2021learning} and generate data for both a linear and non-linear operator. For the linear operator, we aim to learn the integral operation $G$:
\begin{equation}
    G: u(x) \mapsto s(x) = \int_0^{x} u(\tau) d\tau, x\in [0,1],
\end{equation}
where $u$ is the input function, and $s$ is the target function. The data is generated with a mean-zero Gaussian random field function space: $u \sim \mG(0, k_l(x_1, x_2))$, where the covariance kernel $k_l(x_1, x_2) = exp(-||x_1 -x_2||^2/2l^2)$ is the radial-basis function kernel with a length-scale parameter $l = 0.2$. The function $u$ is represented by the function values of $m=100$ fixed locations $\{x_1, x_2, \cdots, x_m\}$. 
The data is generated as $([u, y], G(u)(y))$, where $y$ is sampled from the domain of $G(u)$. We randomly sample 1k data as the training set and test on the testing set with 100k samples.

For the nonlinear operator, we aim to learn the following {stochastic partial differential equation}, which maps $b(x; \omega)$ of different correlation lengths $l\in[1,2]$ to a solution $u(x; \omega)$:
\begin{equation}
    \text{div} (e^{b(x;\omega)}\nabla u(x;\omega)) = f(x), 
\end{equation}
where $x \in (0, 1)$, 
$\omega$ from the random space 
with Dirichlet boundary conditions $u(0)=u(1)=0$, and $f(x)=10$. The randomness comes from the diffusion coefficient $e^{b(x;\omega)}$.  The function $b(x;\omega) \sim \mG\mP(b_0(x), \text{cov}(x1, x2))$ is modelled as a Gaussian random process $\mG\mP$, with mean $b_0(x)=0$ and $\text{cov}(x1,x2) = \sigma^2 \exp(-||x_1 -x_2||^2/2l^2)$.
We randomly sample 1k training samples and 10k test samples.

\begin{table*}[t]
	\caption{Ablation studies on linear and nonlinear operators learning with synthetic data and depth estimation on NYU-Depth-v2. For operator learning, we report results as mean $\pm$ standard variance over 10 runs. 
    \textbf{Bold} numbers indicate the best performance.}
	\label{tab:depth_ablation}
	\centering
	\scalebox{0.95}{
		\begin{tabular}{l|c|c|cccc}
			\hline
			\multirow{2}[0]{*}{Method} & \multicolumn{1}{c|}{Linear}& \multicolumn{1}{c|}{Nonlinear} & \multicolumn{4}{c}{NYU-Depth-v2}\\
 			\cline{2-7}
			& $\mL_{\text{mse}}$($\times 10^{-3}$)~$\downarrow$ & $\mL_{\text{mse}}$($\times 10^{-2}$)~$\downarrow$ & $\delta_1$~$\uparrow$ & REL~$\downarrow$ & RMS~$\downarrow$ & $\log_{10}$~$\downarrow$ \\
			\hline
			Baseline  & 3.0 $\pm$ 0.93 &  2.5$\pm$ 2.0 & 0.793  & 0.148 & 0.502 & 0.064 \\
			Baseline + $\mL'_d$ & 2.2 $\pm$ 0.42 & 0.8 $\pm$ 0.3 & 0.795  & 0.147 & 0.505 & 0.063  \\
			Baseline + $\mL_d$ &  \textbf{1.6} $\pm$ \textbf{0.34} & 0.5 $\pm$ 0.3 & 0.808 & 0.144 & 0.483 & 0.061 \\
			Baseline + $\mL_d + \mL_t$   & - & - & \textbf{0.811} & \textbf{0.143} & \textbf{0.478} & \textbf{0.060} \\
			\midrule
			w/ cosine distance  & 1.7 $\pm$ 0.33 & 0.6 $\pm$ 0.5 & 0.806 & 0.147 & 0.488 & 0.061 \\
			w/o normalization  & 5.3 $\pm$ 1.10  & 15.5 $\pm$ 0.4 & 0.790 & 0.153 & 0.510 & 0.064\\	\midrule
			$w_{ij}= ||y_i -y_j||_2^2$  & 2.0  $\pm$ 0.60 & \textbf{0.4} $\pm$ \textbf{0.2} & 0.800 & 0.149 & 0.498 & 0.063\\
			$w_{ij}= \sqrt{||y_i -y_j||_2}$  & 1.8 $\pm$ 0.50  & 0.5$\pm$ 0.2 & 0.799 & 0.147 & 0.496 & 0.062\\
			\hline
		\end{tabular}}
\end{table*}

For operator learning, we set $\mL_{\text{mse}}$ as the task-specific baseline loss for both the linear and non-linear operator. Table~\ref{tab:depth_ablation} shows that even without ordinal information, adding the diversity term \ie $\mL'_d$ to  $\mL_\text{mse}$ already improves performance.  The best gains, however, are achieved by incorporating the weighting with $\mL_d$, which decreases 
$\mL_{\text{mse}}$ by 46.7\% for the linear operator and up to 80\% for the more challenging non-linear operator. The corresponding standard variances are also reduced significantly. 

Note that we do not verify $\mL_t$ on operator learning due to the 
high data precision on synthetic datasets, it is difficult to sample points belonging to the same $\bz_{c_i}$. Adding $\mL_t$, however, is beneficial for the three real-world tasks (see Sec.~\ref{subsection_depth}).  

\subsection{Real-World Tasks: Depth Estimation, Crowd Counting \& Age Estimation}
\label{subsection_depth}

\textbf{Depth Estimation:} 
Table~\ref{tab:nyu} shows that adding the ordinal entropy terms boosts the performance of the regression baseline and the state-of-the-art regression method~NeW-CRFs~\citep{yuan2022new}.  
NeW-CRFs with ordinal entropy achieves the highest values for all metrics, decreasing $\delta_1$ and REL errors by $12.8\%$ and $6.3\%$, respectively.
Moreover, higher improvement can be observed when adding the ordinal entropy into a simpler baseline, \ie ResNet-50.

\begin{table*}[t]
	\caption{{Quantitative comparison} of depth estimation results with NYU-Depth-v2. \textbf{Bold} numbers indicate the best performance.}
	\label{tab:nyu}
	\centering
		\begin{tabular}{l|cccc}
			\hline
			\multirow{1}[0]{*}{Method}
			& $\delta_1$~$\uparrow$ & REL~$\downarrow$ & RMS~$\downarrow$ & $\log_{10}$~$\downarrow$ \\
			\hline
			Laina et al. \citep{laina2016deeper} & 0.811 & 0.127 & 0.573 & 0.055  \\
			DORN \citep{fu2018deep} & 0.828 & 0.115 & 0.509 & 0.051  \\
			BTS \citep{lee2019big} & 0.885  & 0.110 & 0.392 & 0.047  \\
			Adabins \citep{bhat2021adabins} & 0.903 & 0.103 & 0.364 & 0.044 \\
			\midrule
			NeW-CRFs \citep{yuan2022new} & 0.922  & 0.095 & 0.334 & 0.041 \\
			NeW-CRFs + $\mL_d + \mL_t$ & ~~\bf{0.932}~~ & ~~\bf{0.089}~~ & ~~\bf{0.321}~~ &~~\bf{0.039}~~\\
			Baseline (ResNet-50) & 0.793  & 0.148 & 0.502 & 0.064\\
			Baseline (ResNet-50) + $\mL_d$ & 0.808 & 0.144 & 0.483 & 0.061\\
			Baseline (ResNet-50) + $\mL_d + \mL_t$ & {0.811} & {0.143} & {0.478} & {0.060}\\
			\hline
		\end{tabular}
\end{table*}

\begin{table*}[t]
	\caption{Results on SHTech. \textbf{Bold} numbers indicate the best performance.}
	\label{tab:counting}
	\centering
		\begin{tabular}{c|cc|cc}
			\hline
			\multirow{2}[0]{*}{Method} & \multicolumn{2}{c|}{MAE~$\downarrow$} & \multicolumn{2}{c}{MSE~$\downarrow$}  \\
 			\cline{2-5}
			& SHA & SHB &  SHA & SHB\\
			\hline
			\shihao{Regression} Baseline~\citep{li2018csrnet} & 68.2 & 10.6 & 115.0 & 16.0 \\
			+$\mL_d$ & 66.9 & \textbf{9.1} & 107.5 & 14.7 \\
			+$\mL_d$ + $\mL_t$ & \textbf{65.6} & \textbf{9.1} & \textbf{105.0} & \textbf{14.5} \\
			\hline
		\end{tabular}
\end{table*}

\begin{table*}[t]
	\caption{Results on AgeDB-DIR. 
	\textbf{Bold} numbers indicate the best performance.
	}
	\label{tab:age}
	\centering
		\begin{tabular}{c|cccc|cccc}
			\hline
			\multirow{2}[0]{*}{Method} & \multicolumn{4}{c|}{MAE~$\downarrow$} & \multicolumn{4}{c}{GM~$\downarrow$}  \\
 			\cline{2-9}
 			
			& ALL & Many & Med. & Few & ALL & Many & Med. & Few\\
			\hline	Baseline~\citep{yang2021delving} & 7.77 & \textbf{6.62} & 9.55 & 13.67 & 5.05 & 4.23 & 7.01 & 10.75 \\
			$+ \mL_d$ & 7.60 & 6.79 & 8.55 & 12.70 & 4.76 & 4.15 & 5.95 & \textbf{9.60} \\
			$+ \mL_d + \mL_t$  & \textbf{7.46} & 6.73 & \textbf{8.18} & \textbf{12.38} & \textbf{4.72} & \textbf{4.21} & \textbf{5.36} & 9.70 \\
			\hline
		\end{tabular}
\end{table*}

\textbf{Crowd Counting:} Table~\ref{tab:counting} shows that adding $\mL_d$ and $\mL_t$ each contribute to improving the baseline.  Adding both terms has the largest impact and for SHB, the improvement is up to 14.2\% on MAE and 9.4\% on MSE.  

\textbf{Age Estimation:}
Table~\ref{tab:age} shows that with $\mL_d$ we achieve a significant 0.13 and 0.29 overall improvement (\ie ALL) on MAE and GM, 
respectively. Applying $\mL_t$ achieves a further overall improvement over $\mL_d$ only, including 0.14 on MAE and 0.04 on GM.

\subsection{Ablation Studies}
Ablation results on both operator learning and depth estimation are shown in Table~\ref{tab:depth_ablation} and Figure~\ref{fig:ablation}. 

\textbf{Ordinal Relationships:} Table \ref{tab:depth_ablation} shows that using the unweighted diversity term `Baseline+$\mL'_d$', which ignores ordinal relationships, is worse than the weighted version `Baseline+$\mL_d$' for both operator learning and depth estimation.

\textbf{Feature Normalization:} As expected, normalization is important, as performance will decrease (compare `w/o normalization' to `Baseline+$\mL_d$' in Table~\ref{tab:depth_ablation}) for both operator learning and depth estimation.  Most interestingly, normalization also helps to lower variance for operator learning. 

\textbf{Feature Distance} $||\bz_{c_i}-\bz_{c_j}||$: We replace the original feature distance L2 with cosine distance (see `w/ cosine distance') and the cosine distance is slightly worse than L2 for all the cases.

\textbf{Weighting Function} $w_{ij}$: The weight as defined in Eq.~\ref{eq:ordinal_entropy} is based on an L2 distance. Table~\ref{tab:depth_ablation} shows that L2 is best for linear operator learning and depth estimation but is slightly worse than 
$w_{ij}= ||y_i -y_j||_2^2$
for nonlinear operator learning.

\textbf{Sample Size (M)}
In practice, we estimate the entropy from a limited number of regressed samples and this is determined by the batch size. For certain tasks, this may be sufficiently large, \eg depth estimation (number of pixels per image $\times$ batch size) or very small, \eg age estimation (batch size).  We investigate the influence of $M$ from Eq. \ref{eq:ordinal_entropy} on linear operator learning (see Fig.\ref{fig_lamda_MSE}).
In the most extreme case, when $M$ = 2, the performance is slightly better than the baseline model ($2.7 \times 10^{-3}$ vs. $3.0 \times 10^{-3}$), suggesting that our ordinal \shihao{regularizer} terms are effective even with 2 samples. 
As $M$ increases, the MSE and its variance steadily decrease as the estimated entropy likely becomes more accurate.  However, at a certain point, MSE and variance start to increase again. This behavior is not surprising; with too many samples under consideration, it likely becomes too difficult to increase the distance between a pair of points without decreasing the distance to other points, \ie there is not sufficient room to maneuver. 
The results for the nonlinear operator learning are given in Appendix \ref{appendix_nonlinear}.

\textbf{Hyperparameter $\lambda_d$ and $\lambda_t$:} 
Fig.~\ref{fig_lamda_MSE} plots the MSE for linear operator learning versus the trade-off hyper-parameter  $\lambda_d$ applied to the diversity term $\mL_d$.  Performance remains relatively stable up to $10^{-2}$, after which this term likely overtakes the original learning objective $\mL_{\text{mse}}$ and causes MSE to decrease.
\shihao{The results for the nonlinear operator and analysis on $\lambda_t$ are given in Appendix \ref{appendix_nonlinear} and \ref{sec_lambdat}.}

\textbf{Marginal Entropy $\mH(Z)$:} We show the marginal
entropy of the testing set from different methods during training (see Fig.~\ref{fig_entropy_methods}). We can see that the marginal entropy of classification is always larger than that of regression, which has a downward trend. 
Regression with only diversity achieves the largest marginal entropy, which verifies the effectiveness of our diversity term.
With both diversity and tightness terms, as training goes, its marginal entropy continues to increase and larger than that of regression after the $13^{th}$ epoch.
\shihao{More experiment results can be found in Appendix \ref{app_m}.}

\begin{figure}[!t]
	\centering
	\subfigure[MSE with number of samples]{	
	\label{fig_sample_MSE}
	\includegraphics[width=0.30\linewidth]{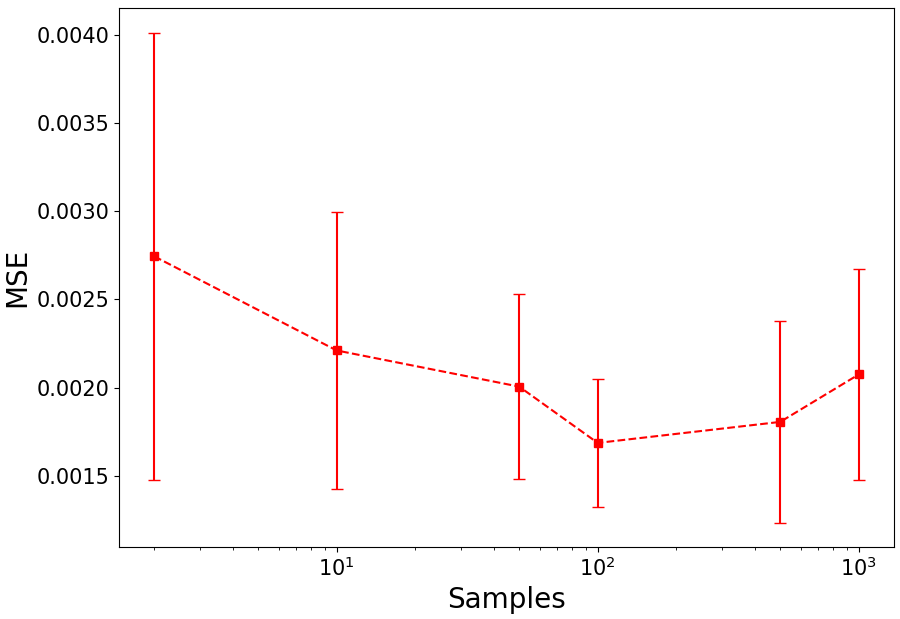}}
	\subfigure[MSE with $\lambda_d$]{	
	\label{fig_lamda_MSE}
	\includegraphics[width=0.30\linewidth]{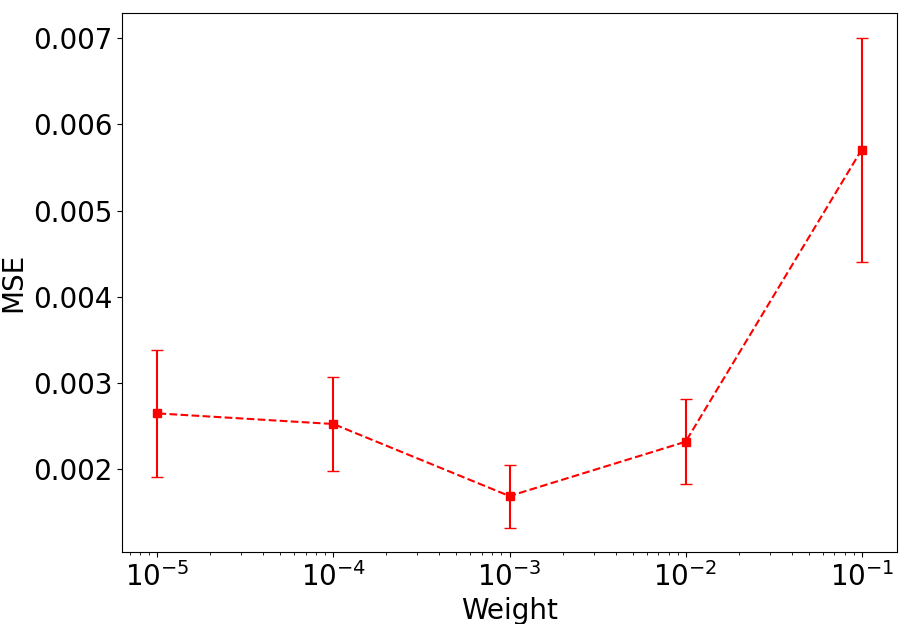}}
	\subfigure[Entropy of different methods]{	
	\label{fig_entropy_methods}
	\includegraphics[width=0.33\linewidth]{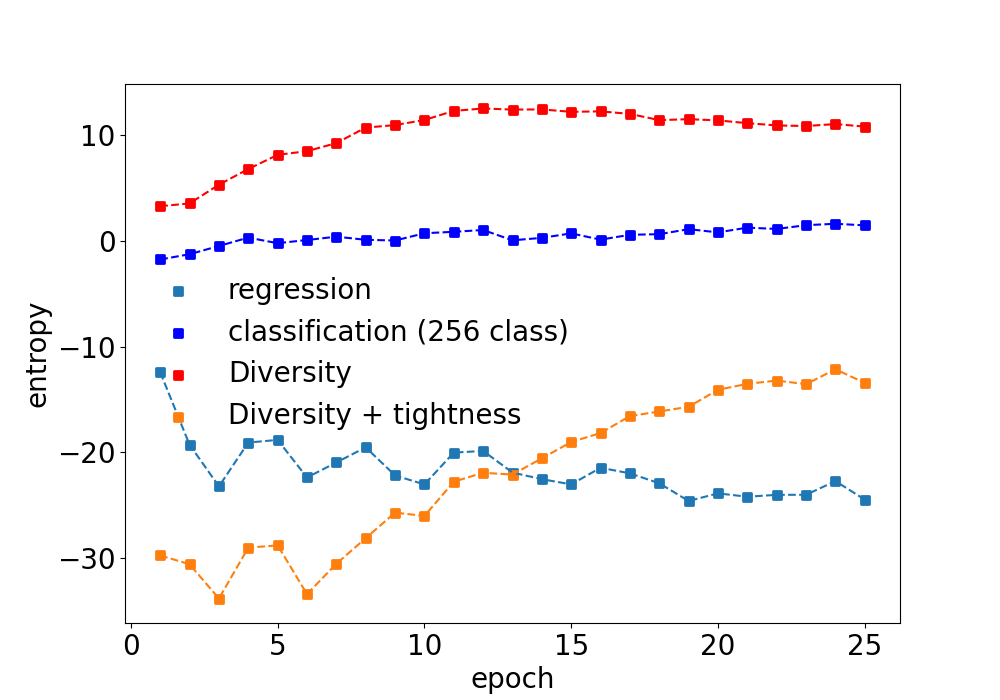}}
	\caption{Based on the linear operator learning and depth estimation, we show (a) the effect of the number of samples on MSE, (b) the performance analysis with different $\lambda_d$ and (c) the entropy curves of different methods during testing. The results for the nonlinear operator learning are given in Appendix \ref{appendix_nonlinear}. }
	\label{fig:ablation}
\end{figure}

\section{Conclusion}
\label{sec:conclusion}

In this paper, we dive deeper into the trend of solving regression-type problems as classification tasks by comparing the difference between regression and classification from a mutual information perspective. We conduct a theoretical analysis and show that regression with an MSE loss lags in its
ability to learn high-entropy feature representations. Based on the findings, we propose an ordinal entropy \shihao{regularizer} for regression, which not only keeps an ordinal relationship in feature space like regression, but also learns a high-entropy feature representation like classification. Experiments on different regression tasks demonstrate that our entropy \shihao{regularizer} can serve as a plug-in component for regression-based methods to further improve the performance. 

\textbf{Acknowledgement}. This research / project is supported by the Ministry of Education, Singapore, under its MOE Academic Research Fund Tier 2 (STEM RIE2025 MOE-T2EP20220-0015).

\bibliography{iclr2023_conference}
\bibliographystyle{iclr2023_conference}

\newpage

\renewcommand\thefigure{\Alph{figure}} 
\setcounter{figure}{0}

\textbf{Appendix}
\appendix
\section{Proof of lemma \ref{Lemma_discretize}}
\label{appendix_A}

\begin{proof}
    \begin{align*}
    \mL_{\text{mse}} 
    &= \frac{1}{n} \sum_{1}^{n}(\varreg^{\trsp}\bz_i - y_i)^{2} \\
    &= \frac{1}{n} \sum_{1}^{n}((\varreg^{\trsp}\bz_i -c_i) + (c_i - y_i))^{2} \\
    &= \frac{1}{n} \sum_{1}^{n}(\varreg^{\trsp}\bz_i -c_i)^2 + \frac{1}{n} \sum_{1}^{n}((c_i-y_i)(2\varreg^{\trsp}\bz_i -c_i -y_i)) 
\end{align*}
Since $|c_i - y_i| \leq \frac{\eta}{2}$, we have:
\begin{align*}
    |\mL_{\text{mse}} - \frac{1}{n} \sum_{1}^{n}(\varreg^{\trsp}\bz_i -c_i)^2| \leq  
    \frac{\eta}{2n} \sum_{1}^{n}|2\varreg^{\trsp}\bz_i -c_i -y_i|
\end{align*}
For $\eta <\frac{2n \epsilon}{\sum_{1}^{n}|2\varreg^{\trsp}\bz_i -c_i -y_i|}$:
\begin{align*}
    |\mL_{\text{mse}} - \frac{1}{n} \sum_{1}^{n}(\varreg^{\trsp}\bz_i -c_i)^2| < \epsilon
\end{align*}
\end{proof}

\section{Visualization}
\label{appendix_C}

\textbf{Experimental setting}
We train the regression and classification models on the NYU-Depth-v2 dataset for depth estimation. We modify the last layer of a ResNet-50 model to a convolution operation with kernel size $1 \times 1$, and train the modified model with $\mL_{\text{mse}}$ as our regression model. For the classification models, we modify the last layer of two ResNet-50 models to output $N_c$ channels, where $N_c$ is the number of classes, and train the modified models with cross-entropy. The classes are defined by uniformly discrete ground-truth depths into $N_c$ bins. The entropy of feature space is estimated using Eq \ref{equ_entropyestimation} on pixel-wise features over the training and test set of NYU-Depth-v2. After training, we visualize the pixel-wise features of an image from the test set using t-distributed stochastic neighbor embedding (t-SNE), and features are colored based on their predicted depth. 

The visualization results are shown in Figure A. We exploit three entropy estimators to estimate the entropy of feature space $\mH(Z)$. Entropy in the first row of Figure A is estimated with the meanNN entropy estimator Eq.~\ref{equ_entropyestimation}. Entropy in the second row is also estimated with the meanNN entropy estimator, where the input is the features normalized with the L2 norm. Entropy in the third row is estimated with the diversity part of our ordinal entropy Eq.~\ref{eq:ordinal_entropy}.

We make several interesting observations from the visualization results: (1) On both training and testing, compared with classification, regression always has a lower estimated entropy based on all three entropy estimators; (2) benefiting from the diversity part of our ordinal entropy $\mL_d$, regression produces a higher entropy space than the classification model; (3) $\mL_d$ with the tightness term $\mL_t$ can result in feature space with a similarly estimated entropy compared with classification.

\begin{figure}[h]
\centering
\subfigure[Training] {
	\includegraphics[width=0.31\columnwidth]{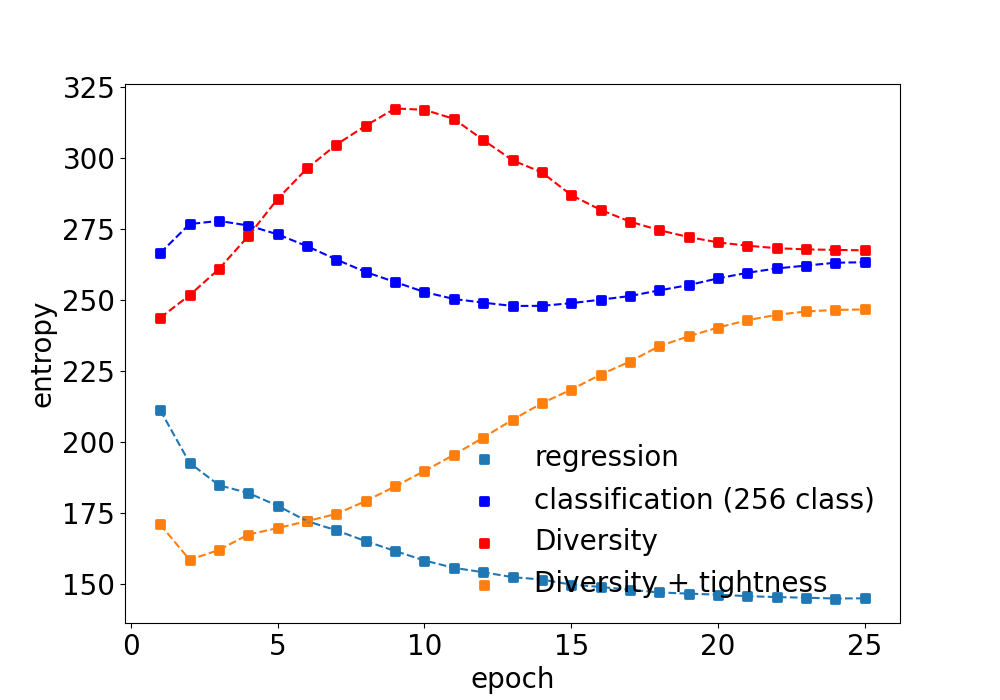}
}
\subfigure[Testing] {
	\includegraphics[width=0.31\columnwidth]{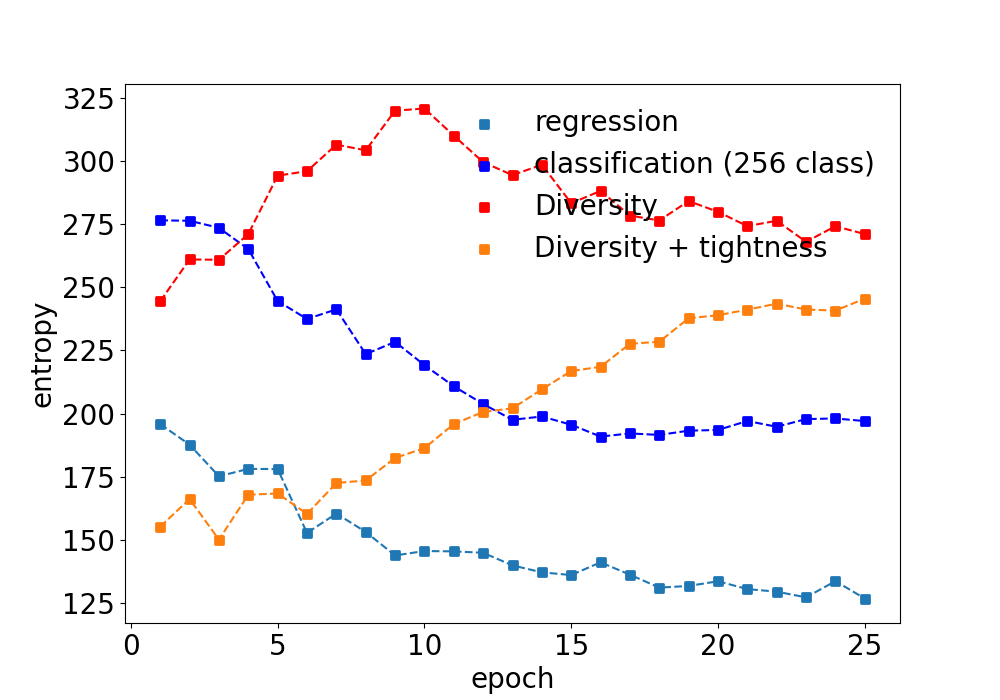}
}
\subfigure[Testing] {
	\includegraphics[width=0.31\columnwidth]{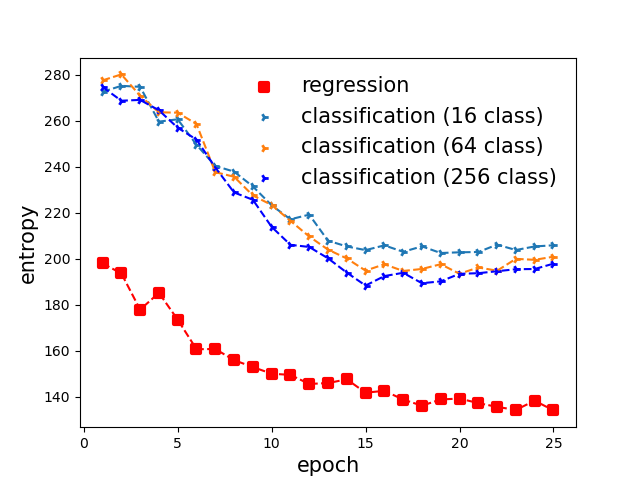}
}
\subfigure[Training(feature normalized)] {
	\includegraphics[width=0.31\columnwidth]{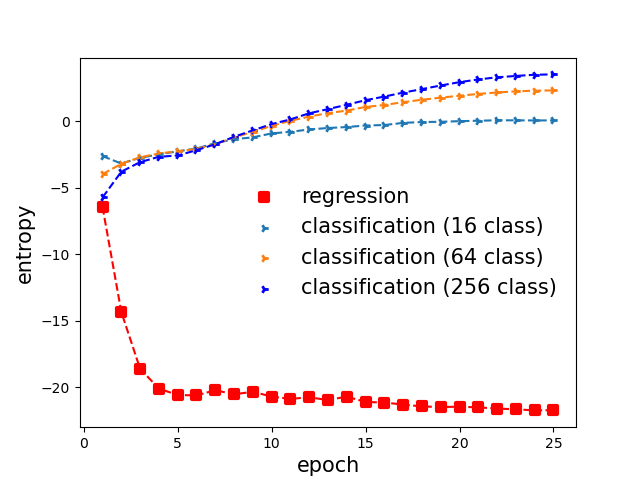}
}
\subfigure[Testing(feature normalized)] {
	\includegraphics[width=0.31\columnwidth]{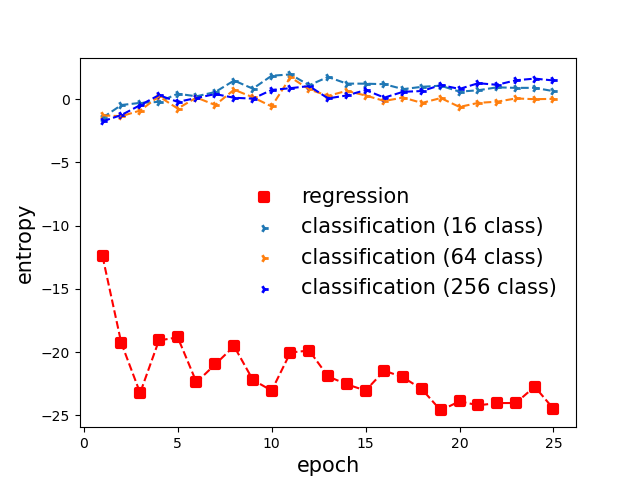}
}
\subfigure[Testing(feature normalized)] {
	\includegraphics[width=0.31\columnwidth]{fig/mv_6.png}
}
\subfigure[Training(ordinal entropy)] {
	\includegraphics[width=0.31\columnwidth]{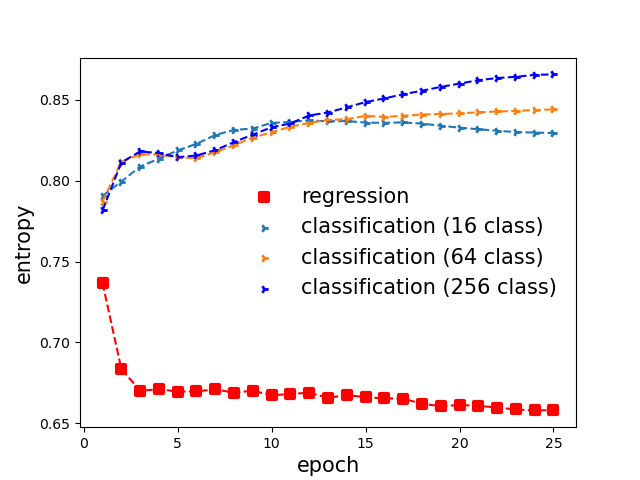}
}
\subfigure[Testing(ordinal entropy)] {
	\includegraphics[width=0.31\columnwidth]{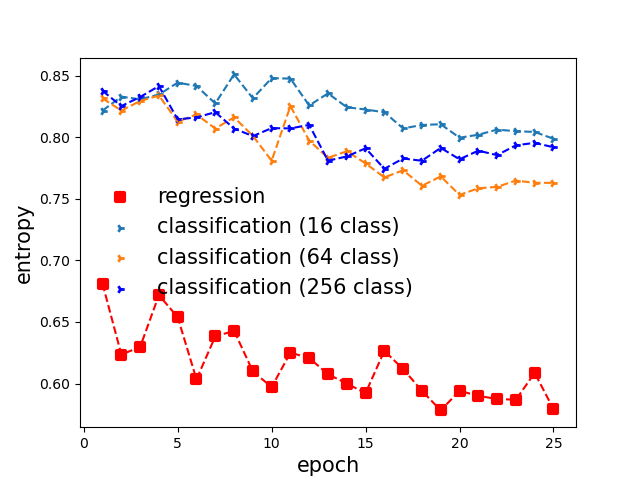}
}
\subfigure[Testing(ordinal entropy)] {
	\includegraphics[width=0.31\columnwidth]{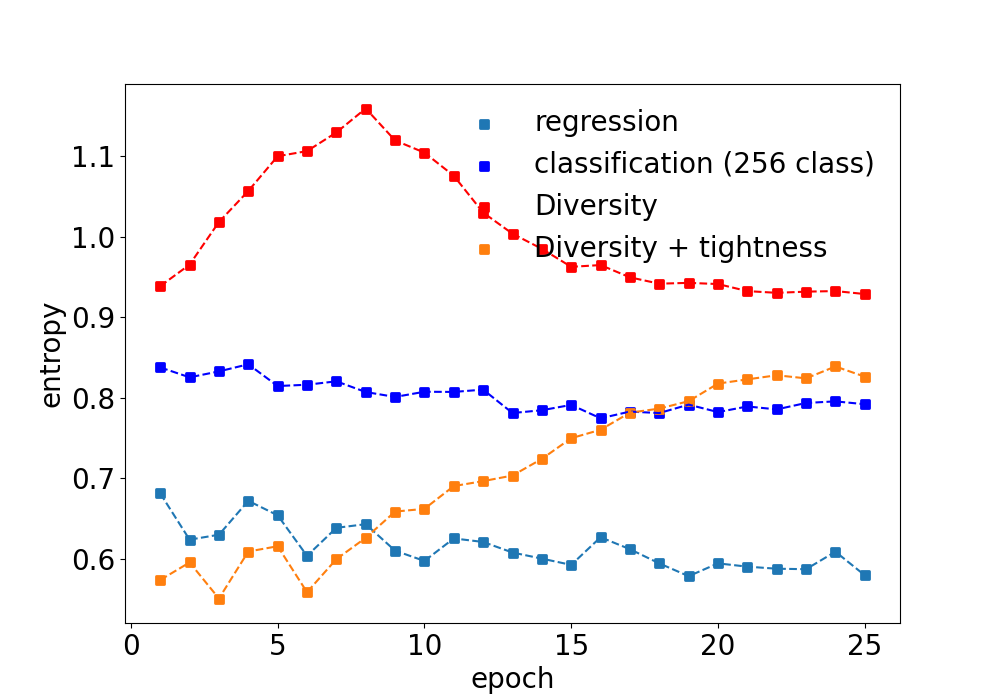}
}
\label{fig:fig_mv}
\caption{Visualization results with different entropy estimators on training and testing set.}
\end{figure}

\section{Results on the nonlinear operator learning problem}
\label{appendix_nonlinear}
We analyze the effect of the number of samples on MSE loss and the performance with $\lambda_d$ on the nonlinear operator learning task. The results are shown in Figure~B. The results corroborate the conclusions derived from the linear task. 

\begin{figure}[h]
\centering
\subfigure[MSE with number of samples] {
	\includegraphics[width=0.45\columnwidth]{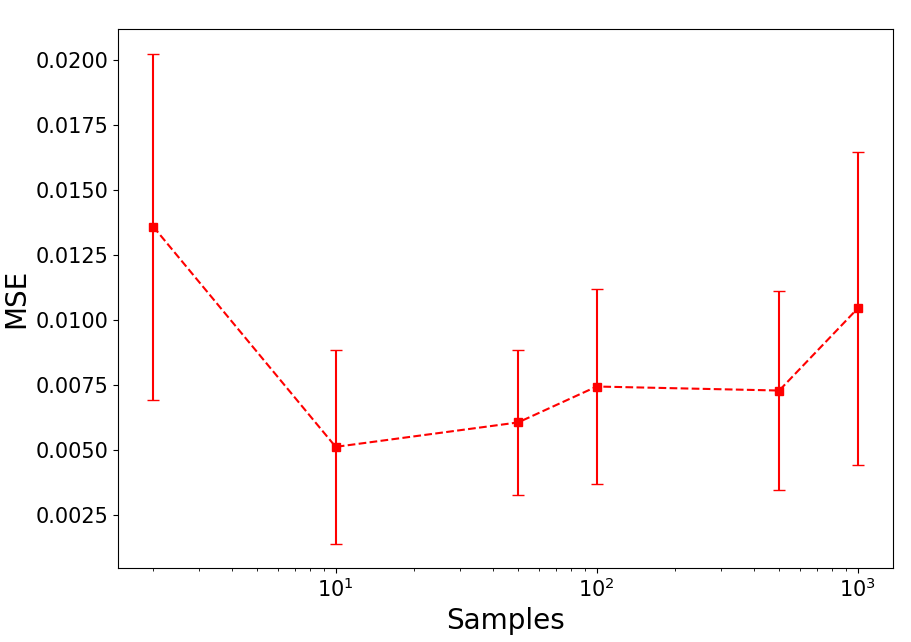}
}
\subfigure[MSE with $\lambda_d$] {
	\includegraphics[width=0.45\columnwidth]{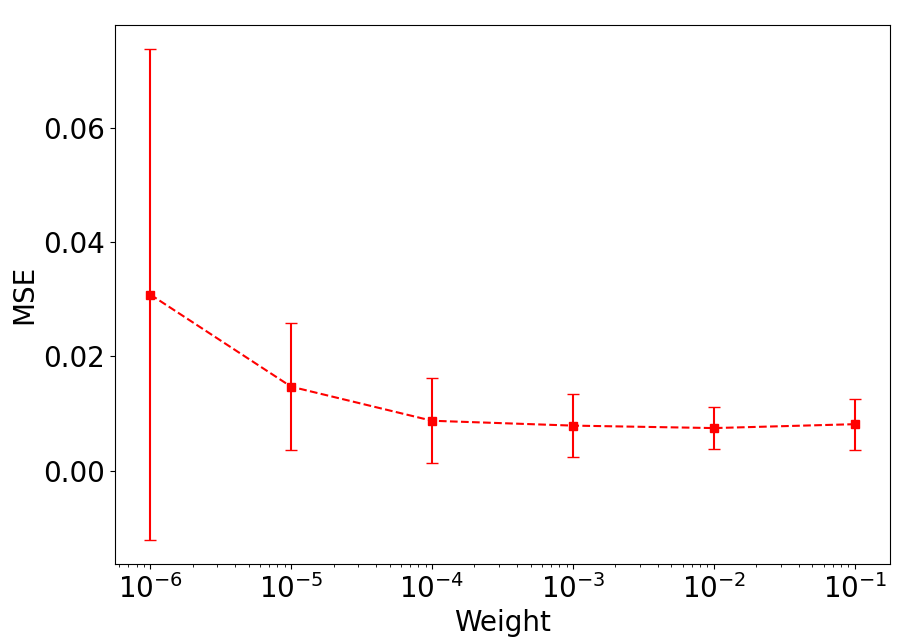}
}
\label{fig:result_nonlinear}
\caption{Based on the nonlinear operator learning problem, we show (a) the effect of the number of samples on MSE loss, (b) the performance analysis with different $\lambda_d$.}
\end{figure}

\section{Evaluation Metrics}
\label{appendix_E}
Here we introduce the definition of the evaluation metrics for depth estimation, crowd counting, and age estimation.

\textbf{Depth Estimation.} We denote the predicted depth at position $p$ as $y_p$ and the corresponding ground truth depth as $y'_p$, the total number of pixels is $n$. 
The metrics are: 
1) threshold accuracy $\delta_1 \triangleq$ \% of $y_p, \st \max(\frac{y_p}{y'_p}, \frac{y'_p}{y_p}) < t_1$, where $t_1=1.25$;
2) average relative error (REL): $\frac{1}{n}\sum_p\frac{|y_p - y'_p|}{y_p}$;
3) root mean squared error (RMS):  $\sqrt{\frac{1}{n}\sum_p(y_p - y'_p)^2}$;
4) average ($\log_{10}$ error): $\frac{1}{n}\sum_p|\log_{10}(y_p) - \log_{10}(y'_p)|$.

\textbf{Crowd Counting.} Given $N$ images for testing, \shihao{$y_i, y'_i$} are the estimated count and the ground truth for the $i$-th image, respectively. We exploit two widely used metrics as measurements:
1) Mean Absolute Error (MAE)$ =\frac{1}{N}\sum_{i=1}^{N} \shihao{|y_i - y'_i|}$, and 
2) Mean Squared Error (MSE) $=\sqrt{\frac{1}{N}\sum_{i=1}^{N} |\shihao{y_i - y'_i}|^2}$.

\textbf{Age Estimation.} Given $N$ images for testing, $y_i$ and $y'_i$ are the $i$-th prediction and ground-truth, respectively. The evaluation metrics include 
1)MAE: $\frac{1}{N}\sum_{i=1}^{N} |y_i - y'_i|$, and 
2)Geometric Mean (GM): $(\prod_{i=1}^{N}|y_i-y'_i|)^{\frac{1}{N}}$.

\section{Effect of $\lambda_t$}
\label{sec_lambdat}

We analyze the effect of $\lambda_t$ with an ablation study on the age estimation with Age-DB-DIR.
Table~\ref{tab:age_lambda_t} shows that the final performance is not sensitive to the change of $\lambda_t$, and $\mL_t$ is effective even with a small $\lambda_t$, \ie 0.1.

\begin{table*}[h]
	\caption{Results on AgeDB-DIR. 
	\textbf{Bold} numbers indicate the best performance.
	}
	\label{tab:age_lambda_t}
	\centering
		\begin{tabular}{c|cccc|cccc}
			\hline
			\multirow{2}[0]{*}{$\lambda_t$} & \multicolumn{4}{c|}{MAE~$\downarrow$} & \multicolumn{4}{c}{GM~$\downarrow$}  \\
 			\cline{2-9}
 			
			& ALL & Many & Med. & Few & ALL & Many & Med. & Few\\
			\hline
			0 & 7.60 & 6.79 & 8.55 & 12.70 & 4.76 & 4.15 & 5.95 & \textbf{9.60} \\
			0.1 & 7.50 & \textbf{6.53} & 8.64 & 13.50 & \textbf{4.71} & \textbf{4.04} & 5.92 & 10.62 \\
			1 & \textbf{7.46} & 6.73 & \textbf{8.18} & \textbf{12.38} & 4.72 & 4.21 & \textbf{5.36} & 9.70 \\
			10 & 7.49 & 6.70 & 8.27 & 12.86 & 4.79 & 4.23 & 5.61 & 10.21 \\
			\hline
		\end{tabular}
\end{table*}

\section{Influence of the Sample Size (M)}
\label{app_m}

Efficiency-wise, the computing complexity of the regularizer is quadratic with respect to $M$.  The synthetic experiments on operator learning (Table \ref{tab:app_m}) use a 2-layer MLP, so the regularizer adds significant computing time when M gets large. However, the real-world experiments on depth estimation (Table \ref{tab:app_m2}) use a ResNet-50 backbone, and the added time and memory are negligible (27\% and 0.3\%, respectively), even with $M=3536$. We exploit $M=3536$ in our depth estimation experiments, where $3536$ is a 16x subsampling of the total number of pixels in an image. Note that these increases are only during training and do not add computing demands for inference. In addition, the added time and memory with $L_t$ are also negligible (0.08\% and ~0\%, respectively), even with M=3536.

\begin{table*}[h]
	\caption{Quantitative comparison of the time consumption and memory consumption on linear operator learning with synthetic data.}
	\label{tab:app_m}
	\centering
		\begin{tabular}{l|cccc}
			\hline
			\multirow{1}[0]{*}{M}
			& Training time (s) & Memory taken (MB)  \\
			\hline
			0 & 155 & 2163    \\
			2 & 291 & 2163    \\
			10 & 296 & 2163    \\
			100 & 327 & 2163   \\
			1000 & 1033 & 2205    \\			
			\hline
		\end{tabular}
\end{table*}

\begin{table*}[t]
	\caption{Quantitative comparison of the time consumption and memory consumption on depth estimation with NYU-v2. The training time is  one epoch training time.}
	\label{tab:app_m2}
	\centering
		\begin{tabular}{lr|ccc}
			\hline
			\multirow{1}[0]{*}{M}
			& Regularizer & Training time (s) & Memory taken (MB) \\
			\hline
			0 & no regularizer & 1836 & 12363   \\
			100 & $L_d+L_t$ & 1877 & 12379   \\
			1000 & $L_d+L_t$ & 1999 & 12395   \\
			3526  & $L_d$ & 2342 & 12405   \\
			3526 & $L_d+L_t$ & 2344 & 12405  \\
			\hline
		\end{tabular}
\end{table*}

\end{document}